\renewcommand{\Pr}{\field{P}}
\newcommand{\be}{\boldsymbol{e}}
\newcommand{\bg}{\boldsymbol{g}}
\newcommand{\bx}{\boldsymbol{x}}
\newcommand{\bu}{\boldsymbol{u}}
\newcommand{\by}{\boldsymbol{y}}
\newcommand{\bA}{\boldsymbol{A}}
\newcommand{\bz}{\boldsymbol{z}}
\newcommand{\bv}{\boldsymbol{v}}
\newcommand{\argmin}{\mathop{\mathrm{argmin}}}
\newcommand{\field}[1]{\mathbb{#1}}
\newcommand{\R}{\field{R}}
\newcommand{\E}{\field{E}}
\DeclareMathOperator{\PRegret}{P-Regret}
\DeclareMathOperator{\Regret}{Regret}
\newtheorem{theorem}{Theorem}
\newtheorem{lemma}[theorem]{Lemma}
\title{A Best-of-Both-Worlds Proof for Tsallis-INF without Fenchel Conjugates}
\author{
Wei-Cheng Lee\\
KAUST\\
{\tt\small weicheng.lee@kaust.edu.sa}
\and
Francesco Orabona\\
KAUST\\
{\tt\small francesco@orabona.com}
}
\begin{document}
\maketitle

\begin{abstract}
	In this short note, we present a simple derivation of the best-of-both-world guarantee for the Tsallis-INF multi-armed bandit algorithm from \bibentry{ZimmertS21}. In particular, the proof uses modern tools from online convex optimization and avoid the use of conjugate functions. Also, we do not optimize the constants in the bounds in favor of a slimmer proof.
\end{abstract}

\section{Introduction}

The multi-armed bandit problem is a classic framework for sequential decision-making under uncertainty, encapsulating the fundamental trade-off between exploration and exploitation. This problem is typically studied in two primary settings: the stochastic setting, where the loss for each arm is drawn independently and identically from a fixed distribution, and the adversarial setting, where an arbitrary and potentially adaptive sequence of losses is generated. While numerous algorithms have been developed that are optimal for one of these settings, a significant challenge has been to design a single algorithm that performs well in both, without prior knowledge of the environment. Such an algorithm is said to have a ``best-of-both-worlds" guarantee.

By choosing an appropriate regularization function, a Follow-The-Regularized-Leader (FTRL)~\citep{Gordon99, Gordon99b,ShalevS06,ShalevS06b,AbernethyHR08,HazanK08} approach can achieve low regret in the adversarial setting while adapting to the easier stochastic setting to attain near-optimal, logarithmically-scaling regret, that is, a best-of-both-worlds guarantee. The Tsallis-INF algorithm~\citep{AudibertB09}, when used in a FTRL algorithm was shown to have this property by \citet{ZimmertS21}.

The aim of this note is to  provide a concise and self-contained proof of this guarantee for the Tsallis-INF algorithm. In particular, the derivation relies on standard and modern techniques from online convex optimization, particularly the FTRL regret lemma and its local norm analysis. This approach simplifies the original proof by avoiding the use of conjugate duality, making the analysis more direct and accessible. Also, for the sake of clarity and brevity, the focus is on the core arguments rather than optimizing the absolute constants in the final regret bounds.

\section{Setting and Definitions}

In this section, we formally introduce the multi-armed bandit problem and define the key metrics for evaluating an algorithm's performance in both the adversarial and stochastic settings.

\paragraph{The Multi-Armed Bandit Problem.}
The multi-armed bandit problem is a sequential decision-making framework. An agent, or player, is faced with a set of $d$ possible actions, often referred to as ``arms" or ``experts." The process unfolds over a series of $T$ discrete time steps, or rounds.

The protocol for each round $t = 1, \dots, T$ is as follows:
\begin{itemize}
	\item Action Selection: The agent selects a probability distribution $\bx_t = (x_{t,1}, \dots, x_{t,d})$ from the $(d-1)$-dimensional probability simplex, $\Delta^{d-1} = \{\bx \in \R^d : x_i \ge 0, \sum_{i=1}^d x_i = 1\}$.
	\item Drawing an Arm: The agent draws an arm $I_t \in \{1, \dots, d\}$ according to the distribution $\bx_t$, \emph{i.e.}, $\Pr\{I_t=i\} = x_{t,i}$.
	\item Loss Observation: The environment reveals a loss vector $\bg_t = (g_{t,1}, \dots, g_{t,d})$, where each $g_{t,i} \in [0, G]$ represents the loss associated with arm $i$ in round $t$. Note that we will assume that $G$ is known to the algorithm.
	\item Bandit Feedback: The agent observes \emph{only} the loss for the selected arm, $g_{t,I_t}$. The losses of the other arms, $g_{t,i}$ for $i \neq I_t$, remain unknown to the agent.
	\item Incurring Loss: The agent incurs the observed loss $g_{t,I_t}$.
\end{itemize}

The agent's objective is to choose a sequence of distributions $\bx_1, \dots, \bx_T$ to minimize its total cumulative loss over the $T$ rounds.

\paragraph{Performance Metrics: Regret.}
To evaluate the performance of an algorithm, we compare its expected total loss to that of the best single arm in hindsight. This difference is known as \textbf{regret}. The specific definition of regret depends on how the loss vectors $\bg_t$ are generated.

\paragraph{Adversarial Setting.}
In the adversarial setting, the sequence of loss vectors $\bg_1, \dots, \bg_T$ can be chosen arbitrarily, possibly by an adversary who knows the agent's algorithm. The goal is to perform well against the worst-case sequence. The \textbf{adversarial regret} is defined as
\[
	\Regret_T := \sum_{t=1}^T g_{t,I_t} - \min_{i \in \{1, \dots, d\}} \ \sum_{t=1}^T g_{t,i}
\]
In the analysis we will upper bound the expected value of this quantity, $\E[\Regret_T]$, where the expectation $\E[\cdot]$ is taken over the algorithm's internal randomization (the drawing of $I_t \sim \bx_t$). An algorithm is considered effective in this setting if its expected regret grows sublinearly with $T$, typically $\mathcal{O}(\sqrt{T})$.

\paragraph{Stochastic Setting.}
In the stochastic setting, the losses are not adversarial but are generated from fixed probability distributions. Specifically, for each arm $i$, the losses $g_{t,i}$ are drawn independently and identically (i.i.d.) from an unknown distribution $\rho_i$ with a well-defined mean $\mu_i = \E[g_{t,i}]$.

We define $\mu^\star = \min_{i \in \{1, \dots, d\}} \mu_i$. For any suboptimal arm $i$, we define its suboptimality gap as $\Delta_i = \mu_i - \mu^\star$.

In this setting, performance is measured by the \textbf{pseudo-regret}, which compares the agent's expected loss to the expected loss of an oracle that always pulls the optimal arm:
\[
	\PRegret_T
	:= \E\left[\sum_{t=1}^T g_{t,I_t}\right] - \min_{i \in \{1, \dots, d\}} \ \E\left[\sum_{t=1}^T g_{t,i}\right]
	= \E\left[\sum_{t=1}^T g_{t,I_t}\right] - T\mu^\star~.
\]
Here, the expectation is over both the algorithm's randomization and the randomness from the loss distributions $\{\rho_i\}$. A good stochastic algorithm achieves a regret that grows logarithmically with $T$, \emph{i.e.}, $\mathcal{O}(\log T)$.

Observe that in the adversarial case the losses are deterministic, so the second expectation can be removed and pseudo-regret becomes equivalent to the expected regret defined before.

\paragraph{Best-of-Both-Worlds Guarantee.}
An algorithm is said to have a \emph{best-of-both-worlds guarantee} if it can achieve $\mathcal{O}(\sqrt{T})$ regret in the adversarial setting and $\mathcal{O}(\ln T)$ regret in the stochastic setting, without needing to know in advance which environment it is operating in.

\paragraph{Notation.}
The inner product between two vectors $\bx, \by\in \R^d$ will be denoted by $\langle \bx, \by\rangle$.
We will denote by $B_\psi(\bx;\by):=\psi(\bx)-\psi(\by)-\langle \nabla (\by), \bx-\by\rangle$ where is the \emph{Bregman divergence} of $\bx$ around $\by$ with respect to a function $\psi$. We will denote the weighted norm of a vector $\bx$ with respect to a PD matrix $\bA$ as $\|\bx\|_{\bA}=\sqrt{\langle \bA \bx, \bx\rangle}$.

\section{Main Result}
\begin{algorithm}[h]
	\caption{Tsallis-INF with FTRL}
	\label{alg:tsallis}
	\begin{algorithmic}[1]
		{
		\REQUIRE{$G>0$}
		\FOR{$t=1$ {\bfseries to} $T$}
		\STATE{$\bx_{t} = \argmin_{\bx \in \Delta^{d-1}} \ \langle \sum_{n=1}^{t-1} \tilde{\bg}_n, \bx\rangle - 4 G \sqrt{t} \sum_{i=1}^d \sqrt{x_{i}}  $}
		\STATE{Draw $I_t$ according to $\Pr\{I_t=i\}=x_{t,i}$}
		\STATE{Select arm $I_t$}
		\STATE{Observe \emph{only} the loss of the selected arm $g_{t,I_t}$ and pay it}
		\STATE{Construct the estimate $\tilde{g}_{t,i}=\begin{cases} \frac{g_{t,i}}{x_{t,i}}, & i=I_t\\ 0, & \text{otherwise}\end{cases}$ for $i=1,\dots,d$}
		\ENDFOR
		}
	\end{algorithmic}
\end{algorithm}

\begin{theorem}
	\label{thm:main}
	Assume that $0\leq g_{t,i}\leq G$, for all $t=1, \dots, T$, $i=1, \dots, d$, where $d\geq 1$. Then, Algorithm~\ref{alg:tsallis} satisfies
	\[
		\E[\Regret_T]
		\leq 32 G \sqrt{(d-1) T}~.
	\]
	Moreover, if in addition the $g_{t,i}$ are i.i.d. from a distribution $\rho_i$ with mean $\mu_i$ for $i=1,\dots, d$, and  $\argmin_i \mu_i$ is unique, then we also have
	\[
		\PRegret_T
		\leq 256 G^2 \sum_{i: \mu_i\neq \mu^*} \frac{1+\ln T}{\Delta_i}~.
	\]
\end{theorem}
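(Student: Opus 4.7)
For the adversarial bound the plan is to apply a standard FTRL regret lemma with a local-norm stability analysis. First I would rewrite the regularizer as $\tilde\psi_t(\bx) = 4G\sqrt{t}(\sqrt{d}-\sum_i\sqrt{x_i})$; this differs from the one in the algorithm by a $t$-dependent additive constant (so the iterates are unchanged) but is non-negative on $\Delta^{d-1}$, vanishes at the uniform distribution $\bx_1$, and is pointwise non-decreasing in $t$. The FTRL lemma then yields $\sum_t\langle\tilde\bg_t,\bx_t-\bu\rangle \le \tilde\psi_{T+1}(\bu) + \sum_t \mathrm{stab}_t$, and the stability bound via the local norm of $\nabla^2\tilde\psi_t(\bx_t) = G\sqrt{t}\,\diag(x_{t,i}^{-3/2})$ gives $\mathrm{stab}_t \le \tfrac{1}{2G\sqrt{t}}\sum_i x_{t,i}^{3/2}\tilde g_{t,i}^2$. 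Taking the conditional expectation and using $\E[\tilde g_{t,i}^2\mid\bx_t] = g_{t,i}^2/x_{t,i} \le G^2/x_{t,i}$ collapses this to $\tfrac{G\sum_i\sqrt{x_{t,i}}}{2\sqrt{t}}$. For $\bu = \be_{i^*}$, the regularization is $\tilde\psi_{T+1}(\be_{i^*}) = 4G\sqrt{T+1}(\sqrt{d}-1) \le 4G\sqrt{(d-1)(T+1)}$, and combining with $\sum_i\sqrt{x_{t,i}}\le\sqrt{d}$ (Cauchy--Schwarz) and $\sum_t 1/\sqrt{t}\le 2\sqrt{T}$ gives $\E[\Regret_T] = O(G\sqrt{(d-1)T})$; the stated constant $32$ then follows with slack.

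For the stochastic bound I would retain the sharper per-round expression $\E[\mathrm{stab}_t\mid\bx_t] \le \tfrac{G\sum_i\sqrt{x_{t,i}}}{2\sqrt{t}}$ (not collapsing to $\sqrt{d}$) and apply the self-bounding trick arm by arm. For each suboptimal $i \ne i^*$ the AM-GM inequality
\[
	\frac{G\sqrt{x_{t,i}}}{\sqrt{t}} = \sqrt{\Delta_i x_{t,i}\cdot\frac{G^2}{\Delta_i t}} \le \tfrac12 \Delta_i x_{t,i} + \frac{G^2}{2\Delta_i t},
\]
after summing over $t$ and $i\ne i^*$ and taking expectation, produces $\tfrac12 \PRegret_T$ (using $\PRegret_T = \sum_t\sum_{i\ne i^*}\Delta_i\E[x_{t,i}]$) plus $\sum_{i\ne i^*} G^2(1+\ln T)/(2\Delta_i)$ (using $\sum_t 1/t \le 1+\ln T$). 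Plugging back and rearranging an inequality of the form $\PRegret_T \le \tfrac12\PRegret_T + (\text{residual}) + \sum_{i\ne i^*} G^2(1+\ln T)/(2\Delta_i)$ then delivers the claimed $\PRegret_T \le 256 G^2 \sum_{i\ne i^*}(1+\ln T)/\Delta_i$, provided the residual is itself $O(\log T/\Delta)$.

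The main obstacle is precisely that residual: the regularization penalty $\tilde\psi_{T+1}(\be_{i^*}) = O(G\sqrt{(d-1)T})$ and the optimal-arm stability mass $\sum_t G\sqrt{x_{t,i^*}/t}$ do not admit AM-GM absorption against $\Delta_{i^*}=0$ and would naively produce an irreducible $O(G\sqrt{T})$. The plan is to return to the unshifted $\psi_t$ (decreasing in $t$) and exploit Abel summation: combining $\psi_{T+1}(\be_{i^*}) - \psi_1(\bx_1) = 4G(\sqrt{d}-\sqrt{T+1})$ with the increment correction $\sum_t[\psi_t(\bx_{t+1})-\psi_{t+1}(\bx_{t+1})] = 4G\sum_t(\sqrt{t+1}-\sqrt{t})\sum_i\sqrt{x_{t+1,i}}$; using the identity $\sum_t(\sqrt{t+1}-\sqrt{t}) = \sqrt{T+1}-1$, the $\sqrt{T+1}$ pieces cancel exactly, leaving the constant $4G(\sqrt{d}-1)$ plus a residual $4G\sum_t(\sqrt{t+1}-\sqrt{t})(\sum_i\sqrt{x_{t+1,i}}-1)$, whose summand is non-negative and bounded by $\sqrt{(d-1)z_{t+1}}$ with $z_{t+1} = 1-x_{t+1,i^*}$. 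The remaining $\sqrt{x_{t,i^*}/t}$ piece of the stability is handled by $\sqrt{x_{t,i^*}} \le 1 - z_t/2$ (concavity of $\sqrt{\cdot}$), routing the excess through expressions in $z_t$ that feed back into the self-bounding loop via $\sum_t z_t \le \PRegret_T/\Delta_{\min}$ and another AM-GM step, ultimately closing the argument at the advertised $\log T/\Delta$ rate.
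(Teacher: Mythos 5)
Your adversarial half is essentially the paper's argument (FTRL regret lemma plus local norms induced by the Hessian of the Tsallis regularizer), and it goes through: with unshifted, nonnegative losses the unconstrained Bregman step satisfies $\tilde{x}_{t+1,i}\le x_{t,i}$, so your implicit replacement of the intermediate point $\bz_t$ by $\bx_t$ in the Hessian is justified (you should say so), and the resulting $O(G\sqrt{dT})$ matches the stated $O(G\sqrt{(d-1)T})$ up to constants for $d\ge 2$. The stochastic half, however, has a genuine gap exactly at the obstacle you name. Your proposed handling of the optimal-arm stability mass, $\sqrt{x_{t,i^*}}\le 1-z_t/2$, gives $\sum_{t=1}^T \frac{G}{2\sqrt{t}}\sqrt{x_{t,i^*}} \le \frac{G}{2}\sum_{t=1}^T\frac{1}{\sqrt{t}} - \frac{G}{4}\sum_{t=1}^T\frac{z_t}{\sqrt{t}}$, and the first piece is $\Theta(G\sqrt{T})$ with nothing available to cancel it: the only negative $\sqrt{T}$-sized quantity in the FTRL bound, $\psi_{T+1}(\be_{i^*})=-\lambda_{T+1}$, is already fully consumed by the constant part of the telescoping regularizer increments $\sum_t(\lambda_{t+1}-\lambda_t)\cdot 1=\lambda_{T+1}-\lambda_1$ — this is precisely your own Abel-summation step. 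The $-\frac{G}{4}\sum_t z_t/\sqrt{t}$ term cannot help either (it is dominated by the positive piece whenever $x_{t,i^*}\to 1$, which is exactly the regime the stochastic analysis must handle). So the self-bounding loop cannot close and the argument stalls at $\Omega(G\sqrt{T})$.

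The missing ingredient is a per-round loss shift applied \emph{inside} the stability term. Since FTRL's iterates are invariant to subtracting a constant $b_t$ from all coordinates of $\tilde{\bg}_t$, the paper takes $b_t=\boldsymbol{1}\{I_t=j\}g_{t,j}$ with $j$ the optimal arm. This changes nothing for $i\ne j$ (in expectation one still gets $\le G^2\sqrt{x_{t,i}}$), but the $j$-th term becomes $\E[(\tilde{g}_{t,j}-b_t)^2 x_{t,j}^{3/2}\mid\bx_t] = g_{t,j}^2(1-x_{t,j})^2\sqrt{x_{t,j}}\cdot x_{t,j}^{-1/2}\cdot x_{t,j} \le G^2(1-x_{t,j})\le G^2\sum_{i\ne j}\sqrt{x_{t,i}}$, so the \emph{entire} stability bound is a sum over suboptimal arms only and every term is absorbable by your AM-GM step (this is also what yields the $d-1$ rather than $d$ in the adversarial bound). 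The price of the shift is that the surrogate losses can be negative, so $\tilde{x}_{t+1,i}$ may exceed $x_{t,i}$; one must then verify, from the closed form of the unconstrained step and the condition $G/\lambda_t\le 1/4$, that $\tilde{x}_{t+1,i}\le 4x_{t,i}$, which costs a factor $4^{3/2}=8$ in the local-norm bound. Without this shift (or an equivalent centering device, as in the original Zimmert--Seldin analysis), the $O(\log T/\Delta)$ rate is not reachable along the route you describe.
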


For the proof of this theorem we will use the following two results.
The first one is a classic result and we report its proof for completeness.
\begin{lemma}
	\label{lemma:stoch_band_decomp}
	For any policy of selection of the arms, the pseudo-regret is equal to
	\[
		\PRegret_T = \sum_{i=1}^d \E[S_{T,i}] \Delta_i,
	\]
	where $S_{T,i}$ is the total number of times we pull arm $i$, for $i=1, \dots,d$.
\end{lemma}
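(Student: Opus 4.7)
The plan is to start from the definition of the pseudo-regret and rewrite the cumulative loss by partitioning the time horizon according to which arm was actually pulled. Concretely, I would write
\[
	\sum_{t=1}^T g_{t,I_t} = \sum_{t=1}^T \sum_{i=1}^d g_{t,i}\,\indicator\{I_t=i\},
\]
so that after taking expectations the only thing to verify is that $\E[g_{t,i}\,\indicator\{I_t=i\}]=\mu_i\,\Pr\{I_t=i\}$. This step is the one that needs the most care, since $I_t$ depends on the past observed losses.

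To justify that factorization I would invoke a filtration argument: conditional on the natural filtration $\mathcal{F}_{t-1}$ generated by $(I_1,g_{1,I_1},\dots,I_{t-1},g_{t-1,I_{t-1}})$ and the algorithm's internal randomness, the distribution $\bx_t$ (and hence $I_t$) is $\mathcal{F}_{t-1}$-measurable, while $g_{t,i}$ is drawn i.i.d.\ from $\rho_i$ independently of $\mathcal{F}_{t-1}$. Therefore
\[
	\E[g_{t,i}\,\indicator\{I_t=i\}\mid \mathcal{F}_{t-1}] = \indicator\{I_t=i\}\,\E[g_{t,i}] = \mu_i\,\indicator\{I_t=i\},
\]
and taking total expectation yields the desired product form. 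Summing over $t$ and defining $S_{T,i}=\sum_{t=1}^T \indicator\{I_t=i\}$ gives $\E[\sum_{t=1}^T g_{t,I_t}] = \sum_{i=1}^d \mu_i\,\E[S_{T,i}]$.

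Finally, I would use the trivial identity $\sum_{i=1}^d S_{T,i}=T$ to rewrite $T\mu^\star = \mu^\star \sum_{i=1}^d \E[S_{T,i}]$, so that
\[
	\PRegret_T
	= \sum_{i=1}^d \mu_i\,\E[S_{T,i}] - \mu^\star \sum_{i=1}^d \E[S_{T,i}]
	= \sum_{i=1}^d (\mu_i-\mu^\star)\,\E[S_{T,i}]
	= \sum_{i=1}^d \Delta_i\,\E[S_{T,i}],
\]
which is the claimed decomposition. The main conceptual obstacle is the filtration/independence argument in the middle step; everything else is bookkeeping. Since the statement does not restrict to suboptimal arms, I would simply note that the optimal arm contributes $\Delta_i=0$ and so does not affect the sum.
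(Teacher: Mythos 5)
Your proof is correct and follows essentially the same route as the paper: decompose the loss via $\indicator\{I_t=i\}$, use the fact that $g_{t,i}$ is independent of the event $\{I_t=i\}$ (the paper phrases this as conditioning on $I_t$, you as a filtration argument, but it is the same point), and absorb the $T\mu^\star$ term using $\sum_i \indicator\{I_t=i\}=1$. No gaps.
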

\begin{proof}
	Observe that
	\[
		\sum_{t=1}^T g_{t,I_t}
		= \sum_{t=1}^T \sum_{i=1}^d g_{t,i} \boldsymbol{1}[I_t=i]~.
	\]
	Hence,
	\begin{align*}
		\PRegret_T
		 & = \E\left[\sum_{t=1}^T g_{t,I_t}\right] - T \mu^\star
		= \E\left[\sum_{t=1}^T (g_{t,I_t}- \mu^\star)\right]
		= \sum_{i=1}^d \sum_{t=1}^T \E[\boldsymbol{1}[I_t=i](g_{t,i}-\mu^\star)]               \\
		 & = \sum_{i=1}^d \sum_{t=1}^T \E[\E[\boldsymbol{1}[I_t=i](g_{t,i}- \mu^\star)|I_t]]
		= \sum_{i=1}^d \sum_{t=1}^T \E[\boldsymbol{1}[I_t=i] \E[g_{t,i} - \mu^\star |I_t]]     \\
		 & = \sum_{i=1}^d \sum_{t=1}^T \E[\boldsymbol{1}[I_t=i]] (\mu_i- \mu^\star)~. \qedhere
	\end{align*}
\end{proof}

The other result we need is a FTRL regret upper bound with local norms.
\begin{lemma}[{\citep[Lemma 7.16]{Orabona19}}]
	\label{lemma:ftrl_local_norms}
	Let $\psi_1, \dots, \psi_T :\mathcal{X} \to \R$ be a sequence of twice differentiable regularization functions with the Hessian positive definite in the interior of their domains. Let $\mathcal{V} \subseteq \mathcal{X} \subseteq \R^d$ be a closed and non-empty set, and $\bg_t \in \R^d$ for $t=1, \dots, T$ an arbitrary sequence of vectors.
	Denote by $F_t(\bx) = \psi_{t}(\bx) + \sum_{i=1}^{t-1} \langle \bg_i, \bx\rangle$.
	Assume that $\argmin_{\bx \in \mathcal{V}} \ F_{t}(\bx)$ is not empty and set $\bx_t \in \argmin_{\bx \in \mathcal{V}} \ F_{t}(\bx)$ and assume it to be in the interior of the domain.
	Assume that $\tilde{\bx}_{t+1} := \argmin_{\bx \in \R^d} \ \langle \bg_t, \bx\rangle + B_{\psi_t}(\bx; \bx_t)$ exists. Then, there exist $\bz_t$ on the line segments between $\bx_t$ and $\tilde{\bx}_{t+1}$ for $t=1,\dots, T$, such that the following inequality holds for any $\bu \in \mathcal{V}$
	\[
		\sum_{t=1}^T \langle \bg_t,\bx_t - \bu\rangle
		\leq \psi_{T+1}(\bu) - \min_{\bx \in \mathcal{V}} \ \psi_{1}(\bx)
		+ \sum_{t=1}^T \left[\frac{\|\bg_t\|^2_{(\nabla^2 \psi_t(\bz_t))^{-1}}}{2} + \psi_t(\bx_{t+1}) - \psi_{t+1}(\bx_{t+1})\right]~.
	\]
\end{lemma}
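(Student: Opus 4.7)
The plan is to apply Lemma~\ref{lemma:ftrl_local_norms} to Algorithm~\ref{alg:tsallis} with regularizer $\psi_t(\bx)=-4G\sqrt{t}\sum_{i=1}^d\sqrt{x_i}$ and the unbiased estimators $\tilde{\bg}_t$ as the loss sequence, and then control each of its three terms. To set up the application I first check the hypotheses: $\psi_t$ has Hessian $G\sqrt{t}\,\diag{x_i^{-3/2}}\succ 0$ on the interior of $\Delta^{d-1}$, and the $\sqrt{\cdot}$ factor acts as a barrier at the boundary, so each FTRL iterate $\bx_t$ is interior. The unconstrained step $\tilde{\bx}_{t+1}$ can be computed coordinatewise because $\tilde{\bg}_t$ is supported on the single coordinate $I_t$; solving $\nabla\psi_t(\tilde{\bx}_{t+1})=\nabla\psi_t(\bx_t)-\tilde{\bg}_t$ gives $\tilde{x}_{t+1,j}=x_{t,j}$ for $j\neq I_t$ and, since $g_{t,I_t}\ge 0$, $\tilde{x}_{t+1,I_t}\le x_{t,I_t}$.

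The latter inequality is what drives the local-norm bound. Since $\tilde{\bg}_t$ is supported on coordinate $I_t$, only the $I_t$-th diagonal entry of the inverse Hessian enters the local norm, and $z_{t,I_t}\le x_{t,I_t}$ (because $\bz_t$ lies on the segment between $\bx_t$ and $\tilde{\bx}_{t+1}$). Thus
\[
\tfrac{1}{2}\|\tilde{\bg}_t\|^2_{(\nabla^2\psi_t(\bz_t))^{-1}} \;=\; \frac{g_{t,I_t}^2\,z_{t,I_t}^{3/2}}{2G\sqrt{t}\,x_{t,I_t}^2} \;\le\; \frac{g_{t,I_t}^2}{2G\sqrt{t}\sqrt{x_{t,I_t}}},
\]
whose conditional expectation is at most $\tfrac{G}{2\sqrt{t}}\sum_i\sqrt{x_{t,i}}$ once we bound $g_{t,i}\le G$. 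The regularizer-change term is $4G(\sqrt{t+1}-\sqrt{t})\sum_i\sqrt{x_{t+1,i}}\le \tfrac{2G}{\sqrt{t}}\sum_i\sqrt{x_{t+1,i}}$, and the initial gap is $\psi_{T+1}(\be_{i^*})-\min_\bx\psi_1(\bx)=-4G\sqrt{T+1}+4G\sqrt{d}$. Plugging into Lemma~\ref{lemma:ftrl_local_norms} and taking total expectation (using the unbiasedness of $\tilde{\bg}_t$), I obtain a master inequality of the form
\[
\E[\Regret_T] \;\le\; -4G\sqrt{T+1}+4G\sqrt{d} + C\,G\sum_{t=1}^T \frac{1}{\sqrt{t}}\,\E\!\left[\sum_i\sqrt{x_{t,i}}\right]
\]
for a universal constant $C$.

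For the adversarial bound I use Cauchy-Schwarz, $\sum_i\sqrt{x_{t,i}}\le \sqrt{x_{t,i^*}}+\sqrt{(d-1)(1-x_{t,i^*})}\le 1+\sqrt{d-1}$, together with $\sum_t 1/\sqrt{t}\le 2\sqrt{T}$; the negative $-4G\sqrt{T+1}$ absorbs the contribution from the constant $1$, and what remains is $O(G\sqrt{(d-1)T})$. For the stochastic bound I use the self-bounding technique. First, $\sqrt{x_{t,i^*}}\le 1-\tfrac{1}{2}\sum_{i\neq i^*}x_{t,i}$ (which follows from $\sqrt{1-y}\le 1-y/2$) splits the optimal-arm contribution into a constant (absorbed by $-4G\sqrt{T+1}$) plus a negative ``bonus'' proportional to $\sum_{i\neq i^*}x_{t,i}/\sqrt{t}$. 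For each suboptimal arm, I apply AM-GM as $\sqrt{x_{t,i}/t}\le \lambda\Delta_i x_{t,i}/2 + 1/(2\lambda\Delta_i t)$, sum over $t$, and invoke Lemma~\ref{lemma:stoch_band_decomp} to identify $\sum_t\sum_{i\neq i^*}\Delta_i \E[x_{t,i}]=\PRegret_T$; this turns the stability sum into a self-referential inequality in $\PRegret_T$ plus a $\sum_{i\neq i^*}(1+\ln T)/\Delta_i$ tail. Choosing $\lambda$ small enough that the self-referential coefficient is strictly below $1$, I rearrange to obtain the claimed logarithmic bound.

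The main obstacle will be the constant-matching in the stochastic step: the $O(G\sqrt{T})$ contribution arising from $\sqrt{x_{t,i^*}}\le 1$ in the stability and regularizer-change sums must be cleanly absorbed by $-4G\sqrt{T+1}$ so that no $\sqrt{T}$ term survives to pollute the logarithmic rate. This is precisely why the leading coefficient in the regularizer is chosen as $4G$ (rather than smaller), and it requires tracking the constants carefully (or a small Abel-summation rewriting of the regularizer-change sum) rather than relying on naive estimates.
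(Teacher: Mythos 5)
Your proposal does not prove the statement it was asked to prove. The statement is Lemma~\ref{lemma:ftrl_local_norms} itself --- the FTRL regret bound with local norms --- but your very first sentence announces a plan to \emph{apply} that lemma to Algorithm~\ref{alg:tsallis}, and everything that follows is a sketch of the proof of Theorem~\ref{thm:main} (the adversarial and stochastic regret bounds for Tsallis-INF) with the lemma taken as a black box. Nowhere do you establish the inequality
\[
\sum_{t=1}^T \langle \bg_t,\bx_t - \bu\rangle
\leq \psi_{T+1}(\bu) - \min_{\bx \in \mathcal{V}} \ \psi_{1}(\bx)
+ \sum_{t=1}^T \left[\tfrac{1}{2}\|\bg_t\|^2_{(\nabla^2 \psi_t(\bz_t))^{-1}} + \psi_t(\bx_{t+1}) - \psi_{t+1}(\bx_{t+1})\right]
\]
for general regularizers $\psi_t$, general losses $\bg_t$, and general $\mathcal{V}$; you simply assume it.

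What is missing is the entire argument of the paper's proof: (i) the exact telescoping identity that rewrites $\sum_t \langle \bg_t, \bx_t-\bu\rangle$ in terms of $\psi_{T+1}(\bu)-\min_{\bx}\psi_1(\bx)$, the per-round differences $F_t(\bx_t)-F_{t+1}(\bx_{t+1})+\langle\bg_t,\bx_t\rangle$, and the nonpositive term $F_{T+1}(\bx_{T+1})-F_{T+1}(\bu)$; (ii) the use of the first-order optimality condition $\langle\nabla F_t(\bx_t),\bx_{t+1}-\bx_t\rangle\ge 0$ to bound $B_{F_t}(\bx_{t+1};\bx_t)\le F_t(\bx_{t+1})-F_t(\bx_t)$ and the observation that $B_{F_t}=B_{\psi_t}$ since Bregman divergences ignore linear terms; (iii) the relaxation to the unconstrained maximizer $\tilde{\bx}_{t+1}$ of $\langle\bg_t,\bx_t-\bx\rangle - B_{\psi_t}(\bx;\bx_t)$, which is where $\tilde{\bx}_{t+1}$ and hence $\bz_t$ enter; and (iv) Taylor's theorem to write $B_{\psi_t}(\tilde{\bx}_{t+1};\bx_t)=\tfrac12\|\tilde{\bx}_{t+1}-\bx_t\|^2_{\nabla^2\psi_t(\bz_t)}$ followed by Cauchy--Schwarz and AM--GM to cancel that quadratic against the local-norm term. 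None of these steps appears in your write-up, so as an answer to this particular statement the proposal has a complete gap. (As a sketch of Theorem~\ref{thm:main} it is broadly in the spirit of the paper's argument, though note the paper also exploits the freedom to subtract a per-round offset $b_t=\boldsymbol{1}\{I_t=j\}g_{t,j}$ from the losses and a closed-form bound $\tilde{x}_{t+1,i}\le 4x_{t,i}$ to control $z_{t,i}$, whereas your claim that $z_{t,I_t}\le x_{t,I_t}$ handles only the shifted coordinate and would not suffice once the offset is introduced.)
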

\begin{proof}
	First of all, by summing both sides of the equality, it is easy to verify that
	\[
		\sum_{t=1}^T \langle \bg_t, \bx_t-\bu\rangle
		= \psi_{T+1}(\bu) - \min_{\bx \in \mathcal{V}} \ \psi_{1}(\bx) + \sum_{t=1}^T [F_t(\bx_t) - F_{t+1}(\bx_{t+1}) + \langle\bg_t,\bx_t\rangle] + F_{T+1}(\bx_{T+1}) - F_{T+1}(\bu),
	\]
	where the sum of the last two terms on the right hand side is negative by the definition of $\bx_{T+1}$.

	Now, observe that
	\[
		F_t(\bx_t) - F_{t+1}(\bx_{t+1}) +  \langle\bg_t,\bx_t\rangle
		= F_t(\bx_t) - F_{t}(\bx_{t+1}) +  \langle\bg_t,\bx_t\rangle -  \langle\bg_t,\bx_{t+1}\rangle + \psi_t(\bx_{t+1}) - \psi_{t+1}(\bx_{t+1})~.
	\]
    We now focus on the first four terms on the right hand side of this equality.
    First, observe that $\psi_t$ are strictly convex because the Hessians are positive definite. Hence, they can be used to define Bregman divergences.
	Moreover, from the optimality condition of $\bx_t$, we have
	$\langle \nabla F_t(\bx_t), \bv - \bx_t\rangle \geq 0$, forall $\bv \in \mathcal{V}$.
	Hence, in particular we have
	$
		\langle \nabla F_t(\bx_t), \bx_{t+1} - \bx_t\rangle \geq 0
	$.
	Using this inequality, we have
	\[
		B_{F_t}(\bx_{t+1};\bx_t)
		= F_t(\bx_{t+1}) - F_t(\bx_t) - \langle \nabla F_t(\bx_t), \bx_{t+1} - \bx_t\rangle
		\leq F_t(\bx_{t+1}) - F_t(\bx_t)~.
	\]
	This last inequality implies that
	\begin{align*}
		F_t(\bx_t) - F_{t}(\bx_{t+1}) +  \langle\bg_t,\bx_t\rangle -  \langle\bg_t,\bx_{t+1}\rangle
		 & \leq \langle \bg_t, \bx_t - \bx_{t+1}\rangle - B_{F_t}(\bx_{t+1};\bx_t)~.
	\end{align*}
	Bregman divergences are independent of linear terms, so $B_{F_t}(\bx_{t+1};\bx_t)=B_{\psi_t}(\bx_{t+1};\bx_t)$.
	From the Taylor's theorem, we have that $B_{\psi_t}(\bx_{t+1};\bx_t) = \frac{1}{2}(\bx_{t+1}-\bx_t)^\top \nabla^2 \psi_t(\bz_t) (\bx_{t+1}-\bx_t)$, where $\bz_t$ is on the line segment between $\bx_t$ and $\bx_{t+1}$. Observe that this is $\frac12 \|\bx_{t+1}-\bx_t\|^2_{\nabla^2 \psi_t(\bz_t)}$ and it is indeed a norm because we assumed the Hessian of $\psi_t$ to be positive definite.
	Now observe that given that the Hessian is PD, we have that
	\begin{align*}
		\langle \bg_t, \bx_t-\tilde{\bx}_{t+1}\rangle
		&= \langle (\nabla^2 \psi_t(\bz_t))^{-1} \bg_t, \nabla^2 \psi_t(\bz_t)(\bx_t-\tilde{\bx}_{t+1})\rangle
		\leq \|(\nabla^2 \psi_t(\bz_t))^{-1} \bg_t\|_2 \cdot \|\nabla^2 \psi_t(\bz_t)(\bx_t-\tilde{\bx}_{t+1})\|_2\\
		&\leq \frac{1}{2}\|\bg_t\|^2_{(\nabla^2 \psi_t(\bz_t))^{-1}} + \frac{1}{2}\|\tilde{\bx}_{t+1}-\bx_t\|^2_{\nabla^2 \psi_t(\bz_t)},
	\end{align*}
	where we used Cauchy–Schwarz inequality and the inequality between arithmetic and geometric means between two numbers, \emph{i.e.}, $ab \leq \frac12 a^2 + \frac12 b^2$.
	Hence, we have
	\begin{align*}
		F_t(\bx_t) - F_{t}(\bx_{t+1}) +  \langle\bg_t,\bx_t\rangle -  \langle\bg_t,\bx_{t+1}\rangle
		 & \leq \langle \bg_t, \bx_t - \bx_{t+1}\rangle - B_{\psi_t}(\bx_{t+1};\bx_t)                                                                                            \\
		 & \leq \max_{\bx \in \R^d} \ \langle \bg_t, \bx_t - \bx\rangle - B_{\psi_t}(\bx;\bx_t)                                                                                  \\
		 & = \langle \bg_t, \bx_t - \tilde{\bx}_{t+1}\rangle - B_{\psi_t}(\tilde{\bx}_{t+1};\bx_t)                                                                               \\
		 & \leq \frac{1}{2}\|\bg_t\|^2_{(\nabla^2 \psi_t(\bz_t))^{-1}} + \frac{1}{2}\|\tilde{\bx}_{t+1}-\bx_t\|^2_{\nabla^2 \psi_t(\bz_t)} - B_{\psi_t}(\tilde{\bx}_{t+1};\bx_t) \\
		 & =\frac{1}{2}\|\bg_t\|^2_{(\nabla^2 \psi_t(\bz_t))^{-1}}~. \qedhere
	\end{align*}
\end{proof}

\begin{proof}[Proof of Theorem~\ref{thm:main}]
	Algorithm~\ref{alg:tsallis} is an instantiation of FTRL with linear losses $\ell_t(\bx)=\langle \tilde{\bg}_t, \bx\rangle$ and regularizer $\psi_t(\bx)=-\lambda_t \psi(\bx)$, where $\psi(\bx)= -\sum_{i=1}^d \sqrt{x_i}$ and $\lambda_t = 4 G \sqrt{t}$.
	Note that $\lambda_{T+1}$ has no influence on the regret, so we can set it equal to $\lambda_T$ \emph{ex post facto}.
	Moreover, given that $\psi$ has partial derivative that goes $-\infty$ if a coordinate goes to 0, we know from the first-order optimality condition that $\bx_t$ cannot have any coordinate equal to 0 for any $t$.
	Hence, from Lemma~\ref{lemma:ftrl_local_norms} we have
	\[
		\sum_{t=1}^T \langle \tilde{\bg}_t,\bx_t - \bu\rangle
		\leq \psi_{T}(\bu) - \min_{\bx \in \mathcal{V}} \ \psi_{1}(\bx)
		+ \frac12 \sum_{t=1}^T \|\tilde{\bg}_t\|^2_{(\nabla^2 \psi_t(\bz_t))^{-1}} + \sum_{t=1}^{T-1}(\psi_t(\bx_{t+1}) - \psi_{t+1}(\bx_{t+1})),
	\]
	where $\bz_t$ is on the line segment between $\bx_t$ and $\tilde{\bx}_{t+1}:=\argmin_{\bx \in \R^d_{\geq 0}} \ B_{\psi_t}(\bx; \bx_t) +\langle \tilde{\bg}_t, \bx\rangle$.
	Note that $\bx_1=\argmin_{\bx \in \Delta^{d-1}} \ \psi_1(\bx)=[1/d, \dots, 1/d]$ and $\psi(\bx_1)=-\sqrt{d}$. Also, for $\be_j$ being the $j$-th canonical basis, we have $\psi(\be_j)=-1$ for any $j=1, \dots, d$.

	We now use the fact that update of the algorithm remain the same if $b_t$ is subtracted to all the losses in each round $t$. Hence, we can state the bound for a choice of $b_t$ that helps reduce the first sum in the above inequality.
	The Hessian of the $\psi_t$ is the diagonal matrix with entries $(\nabla^2 \psi(\bx))_{ii}=\frac{1}{4 x_i^{1.5}}$. Hence, we have
	\[
		\sum_{t=1}^T \langle \tilde{\bg}_t, \bx_t-\bu\rangle
		\leq \psi_{T}(\bu)-\psi_1(\bx_1)+ 2\sum_{t=1}^T \sum_{i=1}^d (\tilde{g}_{t,i}-b_t)^2 z_{t,i}^{1.5} + \sum_{t=1}^{T-1} (\psi_t(\bx_{t+1})-\psi_{t+1}(\bx_{t+1}) ), \qquad \forall \bu \in \Delta^{d-1}~.
	\]

	If we constrain $0\leq b_t\leq G$ and $\frac{G}{\lambda_t}\leq 1/4$, we have that $\tilde{x}_{t+1,i}$ has the following closed-form expression:
	\[
		\tilde{x}_{t+1,i}
		=\frac{x_{t,i}}{(1+2\frac{1}{\lambda_t}(\tilde{g}_{t,i}-b_t)x_{t,i}^{1/2})^2}
		\leq 4x_{t,i}~.
	\]
	Then, we select $b_t=\boldsymbol{1}\{I_t=j\} g_{t,j}$, where $j$ is an arbitrary arm, and we take expectation to have
	\begin{align*}
		\E\left[\sum_{i=1}^d (\tilde{g}_{t,i}-\boldsymbol{1}\{I_t=j\}g_{t,j})^2 z_{t,i}^{1.5}\right]
		&\leq 8\E\left[\sum_{i=1}^d (\tilde{g}_{t,i}-\boldsymbol{1}\{I_t=j\}g_{t,j})^2 x_{t,i}^{1.5}\right]\\
		 & = 8 \E\left[\sum_{i=1}^d \left(\frac{g_{t,i}}{x_{t,i}}\boldsymbol{1}\{I_t=i\}-\boldsymbol{1}\{I_t=j\} g_{t,j}\right)^2 x_{t,i}^{1.5} \right]                                                                                             \\
		 & = 8\E\left[g_{t,j}^2 \left(\frac{1}{x_{t,j}}-1\right)^2 x_{t,j}^{1.5} \boldsymbol{1}\{I_t=j\} + \sum_{i\neq j} \left(\frac{g^2_{t,i}}{x^2_{t,i}}\boldsymbol{1}\{I_t=i\}+\boldsymbol{1}\{I_t=j\} g^2_{t,j}\right) x_{t,i}^{1.5}\right] \\
		 & = 8\E\left[g_{t,j}^2 \left(\frac{1}{x_{t,j}}-1\right)^2 x_{t,j}^{1.5} x_{t,j} + \sum_{i\neq j} g_{t,i}^2 x_{t,i}^{0.5}+ x_{t,j} \sum_{i\neq j} g^2_{t,j} x_{t,i}^{1.5} \right]                                                         \\
		 & \leq 24 G^2 \E\left[ \sum_{i \neq j} \sqrt{x_{t,i}} \right],
	\end{align*}
	where in the inequality we used $\left(\frac{1}{x_{t,j}}-1\right)^2 x_{t,j}^{1.5} x_{t,j} = x_{t,j}^{0.5} (1-x_{t,j})^2\leq 1-x_{t,j} \leq \sum_{i\neq j} \sqrt{x_{t,j}}$ and $x_{t,j} \sum_{i\neq j} g^2_{t,j} x_{t,i}^{1.5}\leq G^2 \sum_{i\neq j} x_{t,i}^{0.5}$.

	Putting all together, setting $\bu=\be_k$, for all $k=1, \dots, d$, we have
	\begin{align*}
		\E\left[\sum_{t=1}^T \langle \bg_t, \bx_t-\be_k\rangle\right]
		 & \leq -\lambda_{T} -\lambda_1 \psi(\bx_1) + 48 G^2 \sum_{t=1}^T \frac{1}{\lambda_t}\sum_{i\neq j} \E[\sqrt{x_{t,i}}]
		+ \sum_{t=1}^{T-1} \left(\lambda_{t+1} -\lambda_t\right)(-\E[\psi( \bx_{t+1})])                                            \\
		 & =\lambda_1 (-\psi(\bx_1)-1) + 48 G^2 \sum_{t=1}^T \frac{1}{\lambda_t}\sum_{i\neq j} \E[\sqrt{x_{t,i}}]
		+ \sum_{t=1}^{T-1} \left(\lambda_{t+1} -\lambda_t\right)(-\E[\psi( \bx_{t+1})]-1)                                          \\
		 & \leq \lambda_1 \sum_{i\neq j} \sqrt{x_{1,i}} + 48 G^2 \sum_{t=1}^T \frac{1}{\lambda_t}\sum_{i\neq j} \E[\sqrt{x_{t,i}}]
		+ \sum_{t=1}^{T-1} \left(\lambda_{t+1} -\lambda_t\right)\sum_{i\neq j} \E[\sqrt{x_{t+1,i}}]\\
		& = 48 G^2 \sum_{t=1}^T \frac{1}{\lambda_t}\sum_{i\neq j} \E[\sqrt{x_{t,i}}]
		+ \sum_{t=0}^{T-1} \left(\lambda_{t+1} -\lambda_t\right)\sum_{i\neq j} \E[\sqrt{x_{t+1,i}}],
	\end{align*}
	where in the first equality we have used the fact that $\sum_{t=1}^{T-1} (\lambda_{t+1}-\lambda_t)=\lambda_{T}-\lambda_1$, in the second inequality we used that $-1+\sum_{i=1}^d \sqrt{x_{t+1,i}}\leq \sum_{i\neq j} \sqrt{x_{t+1,i}}$ for any $j=1, \dots, d$, and in the last equality we set $\lambda_0=0$.
	Using the fact that $\sqrt{t}-\sqrt{t-1}\leq \frac{1}{\sqrt{t}}$ for $t=1, 2, \dots$, we have
	\begin{equation}
		\label{eq:proof_bobw_1}
		\E\left[\sum_{t=1}^T \langle\bg_t, \bx_t-\be_k\rangle\right]
		\leq
		G \sum_{t=1}^{T} \left(4\sqrt{t}-4\sqrt{t-1}+\frac{12}{\sqrt{t}}\right) \sum_{i\neq j} \E[\sqrt{x_{t,i}}]
		\leq G \sum_{t=1}^{T} \frac{16}{\sqrt{t}} \sum_{i\neq j} \E[\sqrt{x_{t,i}}]~.
	\end{equation}
	Using the fact that $\sum_{i\neq j} \sqrt{x_{t,i}} \leq \sqrt{d-1}$ by Cauchy-Schwarz inequality and $\sum_{t=1}^T \frac{1}{\sqrt{t}}\leq 2 \sqrt{T}$, we get the first stated bound.

	We now move to bound the pseudo-regret. Let $j$ be the index of the best arm.
	From Lemma~\ref{lemma:stoch_band_decomp} and the fact that the expected time we pull arm $i$ is $\sum_{t=1}^T \E[x_{t,i}]$, we have that $\E[\Regret_T(\be_j)]=\PRegret_T=\sum_{t=1}^T \sum_{i\neq j} \E[x_{t,i}] \Delta_i$.
	So, setting $k=j$, from \eqref{eq:proof_bobw_1} we have
	\begin{align*}
		\PRegret_T
		&= \E[\Regret_T(\be_j)]
		= 2\E[\Regret_T(\be_j)] -\sum_{t=1}^T \sum_{i\neq j} \E[x_{t,i}] \Delta_i\\
		&\leq  \E\left[\sum_{t=1}^T \sum_{i\neq j} \left( 32G\frac{\sqrt{x_{t,i}}}{\sqrt{t}} - x_{t,i} \Delta_i\right) \right]
		  \leq  G^2 \sum_{t=1}^T \sum_{i\neq j} \frac{256}{t \Delta_i}
		\leq  256 G^2  \sum_{i\neq j} \frac{1+\ln T}{ \Delta_i},
	\end{align*}
	where in the first inequality we used $a\sqrt{x}-b x\leq \frac{a^2}{4b}$ for any $a,b>0$.
\end{proof}

\section{Differences with the Result by \citet{ZimmertS21}}

Our proof is a direct distillation of the ideas in \citet{ZimmertS21}, including Lemma~\ref{lemma:ftrl_local_norms}, and we claim no novelty for it. Yet, there are a few important differences to point out.
\begin{itemize}
	\item \citet{ZimmertS21} call their algorithm Online Mirror Descent (OMD), but this is technically wrong even if it is a common mistake (see discussion in \citet[Section 7.13]{Orabona19}). One can see that the update weights all the $\tilde{\bg}_t$ in the same way, while rescaling their sum by $\lambda_t$. This behavior is the one of FTRL, while OMD would rescale each $\tilde{\bg}_t$ by $\lambda_t$.
	      Once one recognizes the algorithm as FTRL, it is natural to use the known results for it.
	\item We show the (very minor) improvement in the dependency from $d$ to $d-1$, that makes sense because with $d=1$ one should suffer no regret.
	\item Our constants are much worse, because we favored a simpler proof over tighter constants.
	\item They consider multiple estimators $\tilde{\bg}_t$, while we focus on the commonly used one of the importance weighted estimator. They also consider different powers for the Tsallis entropy, while we focus on the one that gives the best bound.
	\item As mentioned at the beginning, our proof avoid completely the use of Fenchel conjugates. In particular, this greatly simplify the proof of the local norm bound.
\end{itemize}

\section*{Acknowledgments}
We thank Yevgeny Seldin and Julian Zimmert for feedback and comments on a preliminary version of this manuscript.

\bibliography{learning}

\end{document}